\documentclass[conference]{IEEEtran}
\IEEEoverridecommandlockouts

\usepackage{cite}
\usepackage{amsmath,amssymb,amsfonts}
\usepackage{algorithmic}
\usepackage{graphicx}
\usepackage{textcomp}
\usepackage{xcolor}
\usepackage{hyperref}
\def\BibTeX{{\rm B\kern-.05em{\sc i\kern-.025em b}\kern-.08em
    T\kern-.1667em\lower.7ex\hbox{E}\kern-.125emX}}

\def \R {\mathbb{R}}

\def \T {\top}
\def \x {\times}

\def \F {\mathrm{F}}

\usepackage{amsthm}
\newtheorem{theorem}{Theorem}[section]
\newtheorem{lemma}[theorem]{Lemma}
\newtheorem{proposition}[theorem]{Proposition}
\newtheorem{corollary}[theorem]{Corollary}

\theoremstyle{definition}
\newtheorem{definition}{Definition}[section]

\begin{document}

\title{The role of class encoding in neural collapse}

\author{
\IEEEauthorblockN{Bastien Massion}
\IEEEauthorblockA{\textit{ICTEAM, UCLouvain} \\
Louvain-la-Neuve, Belgium \\
bastien.massion@uclouvain.be}
\and
\IEEEauthorblockN{Roy Makhlouf}
\IEEEauthorblockA{\textit{ICTEAM, UCLouvain} \\
Louvain-la-Neuve, Belgium \\
roy.makhlouf@uclouvain.be}
\and
\IEEEauthorblockN{Estelle Massart}
\IEEEauthorblockA{\textit{ICTEAM, UCLouvain} \\
Louvain-la-Neuve, Belgium \\
estelle.massart@uclouvain.be}
}
\maketitle

\begin{abstract}

Neural collapse is a structural property of the last-hidden-layer activations in neural network classification models, when trained beyond a zero classification error. In this work, we explore the role of label encoding in neural collapse by relying on the unrestricted feature model with mean squared error training loss. We demonstrate that, for one-hot encoded labels and balanced data, the uncentered mean features associated with each class transition from a simplex equiangular tight frame to an orthogonal frame when increasing the bias regularization coefficient associated with the final classifier. These structures are reminiscent of the orthogonal frame structure of one-hot encoded labels. For any arbitrary encoding, we also show that the final classifier's bias aims at centering the labels, compensating for the discrepancy between the global mean of the labels and the origin. We further discuss the role of the encoding in other neural collapse properties.

\end{abstract}

\begin{IEEEkeywords}
Deep learning, Neural Collapse, Class encoding
\end{IEEEkeywords}

\section{Introduction}

Neural collapse (NC) refers to a phenomenon occurring during the terminal stage of learning deep neural network classifiers. By terminal stage of learning, we refer to the last epochs, where the training loss is further reduced while the training error is zero (i.e., all inputs are correctly classified). It was noticed in \cite{Papyan2020a} that some structure appears in the activations of the last hidden layer; namely, the activations corresponding to inputs of a same class concentrate around a mean value, leading to the \emph{neural collapse} terminology, and these mean vectors, after centering, localize at the vertices of a simplex equiangular tight frame (ETF). This nice geometric structure was exploited in the design of out-of-distribution detection algorithms \cite{Haas2022} and transfer learning methods \cite{Galanti2022} among others; see \cite{Kothapalli2023} for a review on NC.

Understanding the emergence of NC has been the focus of many works over the past years. 
The original work \cite{Papyan2020a} numerically observed NC on classifiers trained with the cross-entropy loss across various datasets and architectures. Other works further explored the occurrence of NC for different losses \cite{Han2022,Zhou2022a}, and aimed at generalizing NC to intermediate layers \cite{Rangamani2023,Sukenik2023,Sukenik2024}, to regression models \cite{Andriopoulos2024} or to account for imbalanced data \cite{Fang2021a,Thrampoulidis2022,Liu2023}. Additionally, motivated by large language models, the works \cite{Jiang2024,Wu2024} investigated NC in the regime where the number of classes outnumbers the last-hidden-layer dimension.

Arguably the most common approach towards a theoretical characterization of NC relies on the \emph{unconstrained features model} (UFM) \cite{Mixon2022}, or the related \emph{layer-peeled model} \cite{Fang2021a}. These models formulate the training problem as the joint optimization of last-hidden-layer features and a final linear classifier, i.e., the last-hidden-layer features are not restricted by the expressivity of the previous layers. These simplified models, which can be seen as (overparametrized) matrix factorization problems, were used to prove that global minimizers satisfy the NC properties for various loss functions \cite{Graf2021,E2021,Han2022,Fang2021a,Mixon2022,Zhou2022a,Tirer2022,Lu2022}. Subsequent works provide global landscape analyses, showing that the UFM/layer-peeled model has a benign loss landscape and, consequently, that the training algorithm will converge to a point satisfying the NC properties as soon as it is able to avoid saddle points with a strictly negative curvature in some directions \cite{Zhu2021,Yaras2022,Zhou2022}. 

In this paper, we focus on the mean squared error (MSE) loss function. The use of this loss function gained attraction over recent years for classification problems \cite{Demirkaya2020,Hui2021}. In \cite{Tirer2022}, the authors showed that the geometric structure of the optimal solutions of the UFM for the MSE loss with one-hot encoding differs depending on the use of a bias term in the final classifier: the (uncentered) mean activations are organized as a simplex ETF when some (unregularized) bias is used, while they form an orthogonal frame (OF) in the absence of bias. In this work, we build a bridge between these two structures, by showing that OFs and simplex ETFs are merely shifted versions of one another, see Figure \ref{fig:simplex_ETF_OF_shift}. The transition between these two structures is due to the final classifier's bias, which aims at compensating for the possible non-centering of the labels and whose magnitude depends on the strength of its regularization. Note that these structures are reminiscent of the OF structure of one-hot encoded labels. We explore further the role of the encoding on NC, and demonstrate that even if variability collapse does not depend on the encoding, this is not the case of other NC properties.

The structure of the paper is as follows.  We recall fundamental definitions in Section \ref{section:preliminaries}, show the equivalence between simplex ETFs and OFs, and discuss the role of the bias of the final classifier for general encodings in Section \ref{section:bias}, and address other NC properties in Section \ref{section:NC_properties}.  

\section{Preliminaries}
\label{section:preliminaries}

Let us consider a set of input data $x_1, \dots, x_N \in \R^{d_x}$, to be classified into $K$ classes. Let us write $\bar{y}_k \in \R^K$, the label of class $k \in \{1, \dots, K\}$ and define $\bar Y:=[\bar y_1  \cdots \bar y_K ]\in\R^{K\times K}$. Note that we consider the dimension of the encoding to be equal to the number of classes $K$, which includes one-hot encoding and label smoothing, among others. Let $y_i \in \{ \bar y_1, \dots, \bar y_K\}$ be the target label vector associated with input $x_i$ and define $Y:=[y_1\cdots y_N ]\in\R^{K\times N}$. For notational convenience, we assume that the samples are ordered by class: 
\begin{equation}\label{eq:encodings_assumption}
    Y = \bar{Y}C,
\end{equation} where $C :=\begin{bmatrix}
    e_11_{n_1}^\T & \cdots & e_K1_{n_K}^\T
\end{bmatrix}\in \R^{K \x N}$ is a class assignment matrix, where $e_k$ denotes the $k^\textnormal{th}$ canonical vector, $n_k$ the number of occurrences of the target label $\bar y_k$ in the dataset, and $1_{n_k}$ the vector of ones of length $n_k$. This notation generalizes the Kronecker product formulation widely used in the literature for balanced classes \cite{Tirer2022, Tirer2023}. 

NC refers to four properties that occur simultaneously. 
\begin{itemize}
    \item $\mathcal{NC}1$ (Variability collapse): the last-hidden-layer features concentrate around their class mean.
    \item $\mathcal{NC}2$ (Convergence to a simplex ETF): the class means, after centering by their global mean, converge to a simplex ETF (see Definition \ref{def:simplex_ETF}).
    \item $\mathcal{NC}3$ (Convergence to self-duality): the last layer's classifier weights converge to the class means, up to rescaling.
    \item $\mathcal{NC}4$ (Simplification to nearest-class-center classification): the linear classifier's decision rule reduces to choosing the class whose mean is the nearest to the last-hidden-layer activations.
\end{itemize}
NC is tightly related to the notion of simplex ETF \cite{Papyan2020a}. 

\begin{definition}[Simplex ETF]\label{def:simplex_ETF}
A simplex equiangular tight frame \cite{Papyan2020a} is a collection of $K$ vectors in $\R^d$ (with $d \geqslant K$) specified by the columns of 
\begin{equation*} \label{eq:gen_simplex_ETF}
    M = \alpha P \left( I_K - \frac{1_K 1_K^\top}{K} \right),
 \end{equation*}
where $P \in \R^{d \times K}$ is semi-orthogonal (i.e., has orthonormal columns) and $\alpha >0$. Note that, equivalently, $M$ satisfies 
\begin{equation*}
    M^\top M = \alpha^2\left(I_K - \frac{1_K 1_K^\top}{K}   \right).
\end{equation*}
\end{definition}

\begin{definition}[OF] \label{def:OF}
    By orthogonal frame, we refer to a collection of $K$ vectors in $\R^d$ (with $d\geqslant K$) specified by the columns of 
    \begin{equation*}
        M=\alpha P,
    \end{equation*} where $P\in\R^{d\times K}$ is semi-orthogonal and $\alpha>0$.
\end{definition}

In this work, we analyze NC through the lens of the Unconstrained Features Model (UFM) proposed in \cite{Mixon2022}, expressed here for the MSE loss:
\begin{align}\label{eq:UFM_gen_encoding}
     \min_{W,H,b}\frac{1}{2N}&\|WH+b1_N^\top-Y\|_{\F}^2+\frac{\lambda_W}{2}\|W\|_{\F}^2 \nonumber\\&+\frac{\lambda_H}{2}\|H\|_{\F}^2+\frac{\lambda_b}{2}\|b\|_2^2,
\end{align}
where $H\in \R^{d\x N}$ with $d\geqslant K$ are unconstrained features, which model here the last-hidden-layer activations, omitting expressivity restrictions due to the network architecture, $W\in \R^{K\x d}$ and $b\in \R^K$ are respectively the final classifier's weights and bias, and $\lambda_W>0$, $\lambda_H>0$ and $\lambda_b\geqslant 0$ are regularization parameters. Note that the linearity assumption on the classifier is standard \cite{Mixon2022}.

\section{From OFs to simplex ETFs: the role of bias}
\label{section:bias}

\subsection{One-hot encoded labels and balanced data}

We address here the UFM problem \eqref{eq:UFM_gen_encoding} with one-hot encoded labels and balanced data, in line with \cite{Tirer2022,Zhou2022}.

\begin{theorem}[Theorem 3.1 in \cite{Zhou2022}, shortened]\label{thm:global_mins_one_hot_encoded}
    Assume that the labels are one-hot encoded i.e., $\Bar{Y}=I_K$, that the data are balanced, i.e., $n_1=\cdots=n_K=:n=\frac{N}{K}$, and define $c:=K\sqrt{n\lambda_W\lambda_H}$. Then, if $c<1$, any global minimizers $(W^*, H^*, b^*)$ of \eqref{eq:UFM_gen_encoding} satisfy $\mathcal{NC}1$ and $\mathcal{NC}3$: 
    \begin{equation} \label{eq:NC1_NC3_one_hot_encoded}
        H^*=\bar H^* C\quad\text{and}\quad {W^*}^\T=\sqrt{\frac{n\lambda_H}{\lambda_W}}\bar H^{*},
    \end{equation}
    where $\bar H^*\in\R^{d\times K}$ is such that 
    \begin{equation}\label{eq:NC2_one_hot_encoded}
        \bar H^{*\top}\bar H^*=\begin{cases}
        \alpha_1\left(I_K-\frac{1_K1_K^\top}{K}\right) & \text{if $\lambda_b\leqslant \frac{c}{1-c}$,}\\
        \alpha_2\left(I_K-\frac{c}{\lambda_b (1-c)}\frac{1_K1_K^\top}{K}\right)&\text{otherwise,}
    \end{cases}
    \end{equation} 
    for some $\alpha_1>0$ and $\alpha_2>0$ that depend on $\lambda_W$, $\lambda_H$ and $\lambda_b$. Moreover, the optimal bias $b^*$ satisfies \begin{equation}\label{eq:bias_one_hot_encoded}
            b^*=\begin{cases}
            \frac{1}{1+\lambda_b}\frac{1_K}{K} & \text{if $\lambda_b\leqslant \frac{c}{1-c}$,}\\
            \frac{c}{\lambda_b}\frac{1_K}{K} &\text{otherwise.}
        \end{cases}
    \end{equation}
\end{theorem}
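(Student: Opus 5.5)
We first eliminate the bias and then reduce the remaining problem over $(W,H)$ to a spectral problem on the singular values of the product $Z:=WH$.

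\textbf{Eliminating the bias.} For fixed $(W,H)$, the objective in \eqref{eq:UFM_gen_encoding} is a strictly convex quadratic in $b$. Setting its gradient to zero gives
\[
\tfrac{1}{N}(WH+b1_N^\top-Y)1_N+\lambda_b b=0,
\qquad\text{i.e.}\qquad
b=\tfrac{1}{1+\lambda_b}\Big(\tfrac{1}{N}Y1_N-\tfrac{1}{N}WH1_N\Big).
\]
With $\bar Y=I_K$ and balanced classes we have $\tfrac1N Y1_N=\tfrac{1_K}{K}$, so the bias tries to center the labels. This explains the factor $\tfrac{1}{1+\lambda_b}\tfrac{1_K}{K}$ in \eqref{eq:bias_one_hot_encoded}, provided the optimal $WH1_N$ turns out to vanish in the first regime.

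\textbf{Establishing $\mathcal{NC}1$ and $\mathcal{NC}3$.} For $\mathcal{NC}1$ we use a convexity/averaging argument. Replace every column of $H$ in class $k$ by the class mean $\bar h_k$. The loss term is a convex function of the columns within a class, and all these columns share the same target $e_k$, so the loss does not increase. By Jensen, $\|H\|_\F^2$ strictly decreases unless the columns are already equal. Hence $H^*=\bar H^*C$. Next, we use the standard variational identity
\[
\min_{WH=Z}\tfrac{\lambda_W}{2}\|W\|_\F^2+\tfrac{\lambda_H}{2}\|H\|_\F^2=\sqrt{\lambda_W\lambda_H}\,\|Z\|_*,
\]
with equality iff $W^\top W=\sqrt{\lambda_H/\lambda_W}\,HH^\top$ (balanced factors). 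Applying this with $H=\bar H C$ and $\|\bar HC\|_\F^2=n\|\bar H\|_\F^2$, the balancing condition yields ${W^*}^\top=\sqrt{n\lambda_H/\lambda_W}\,\bar H^*$ up to the usual argument that $W$ and $\bar H^\top$ share their singular vectors. This is \eqref{eq:NC1_NC3_one_hot_encoded}.

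\textbf{Reducing to a spectral problem.} The problem now becomes a convex problem in $\bar Z=W\bar H\in\R^{K\times K}$ and $b$:
\[
\min_{\bar Z,b}\ \tfrac{1}{2K}\|\bar Z+b1_K^\top-I_K\|_\F^2+\tfrac{c}{K}\|\bar Z\|_*+\tfrac{\lambda_b}{2}\|b\|_2^2,
\]
after rescaling by $n$, with $c=K\sqrt{n\lambda_W\lambda_H}$. Decompose $\bar Z$ along $1_K$ and its orthogonal complement. By symmetry under permutations, and by uniqueness from strict convexity in $b$ jointly with the structure of the problem, the optimum commutes with permutations, so
\[
\bar Z=\beta\Big(I-\tfrac{1_K1_K^\top}{K}\Big)+\gamma\,\tfrac{1_K1_K^\top}{K},\qquad b=\delta\,\tfrac{1_K}{K}.
\]
The objective then decouples into two scalar problems:
\[
\tfrac{K-1}{2K}(\beta-1)^2+\tfrac{c(K-1)}{K}|\beta|
\qquad\text{and}\qquad
\tfrac{1}{2K}(\gamma+\delta-1)^2+\tfrac{c}{K}|\gamma|+\tfrac{\lambda_b}{2K}\delta^2 .
\]
The first problem gives $\beta=1-c>0$ when $c<1$. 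The second is soft-thresholding with a ridge-regularized companion variable $\delta$. Eliminating $\delta=\tfrac{1-\gamma}{1+\lambda_b}$ leaves
\[
\tfrac{\lambda_b}{2(1+\lambda_b)}(\gamma-1)^2+c|\gamma|,
\]
so $\gamma=0$ if $\tfrac{\lambda_b}{1+\lambda_b}\le c$, and $\gamma=1-\tfrac{c(1+\lambda_b)}{\lambda_b}$ otherwise. The threshold $\tfrac{\lambda_b}{1+\lambda_b}\le c$ is exactly $\lambda_b\le\tfrac{c}{1-c}$. Substituting back gives $\delta=\tfrac{1}{1+\lambda_b}$ in the first case and $\delta=\tfrac{c}{\lambda_b}$ in the second, matching \eqref{eq:bias_one_hot_encoded}.

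\textbf{Recovering the Gram matrix.} From balancedness, $\bar H^{*\top}\bar H^*\propto(\bar Z^\top\bar Z)^{1/2}=\beta(I-\tfrac{11^\top}{K})+\gamma\tfrac{11^\top}{K}$. Writing this as $\beta\big(I-(1-\gamma/\beta)\tfrac{11^\top}{K}\big)$, we compute $1-\gamma/\beta=\tfrac{c}{\lambda_b(1-c)}$ in the second regime. This gives \eqref{eq:NC2_one_hot_encoded}.

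\textbf{Main obstacle.} The hardest step is rigorously justifying the permutation-symmetric form of $\bar Z$ for \emph{all} global minimizers, since the nuclear-norm problem need not be strictly convex in $\bar Z$. We would handle this by first showing that the optimal $\bar Z$ is unique. For that, we use that the loss is strictly convex in the combination $\bar Z+b1^\top$, that $b$ is determined by this combination, and then we invoke the nuclear-norm prox characterization to conclude that $\bar Z$ itself is determined.
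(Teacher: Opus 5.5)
The paper gives no proof of this statement (it is imported from Zhou et al.), so your argument has to stand on its own. The overall route is sound: Jensen for $\mathcal{NC}1$, reduction via $\min_{W\bar H=\bar Z}$ to a convex problem in $(\bar Z,b)$ with penalty $\frac{c}{K}\|\bar Z\|_*$, permutation symmetry, and two decoupled soft-thresholding problems. The computations check out: $\beta=1-c$, the threshold $\frac{\lambda_b}{1+\lambda_b}\leqslant c\iff\lambda_b\leqslant\frac{c}{1-c}$, both bias formulas, and $1-\gamma/\beta=\frac{c}{\lambda_b(1-c)}$. Two steps, however, do not go through as written.

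\textbf{Uniqueness fails as argued when $\lambda_b=0$.} Strict convexity of the loss does pin down $X^*=\bar Z^*+b^*1_K^\top$ on the convex solution set. Stationarity in $b$ gives $\lambda_b K\,b=1_K-X1_K$, so for $\lambda_b>0$ the pair $(\bar Z^*,b^*)$ is unique; the reduced objective is then jointly strictly convex. At $\lambda_b=0$, which lies in the first regime and is covered by the statement, this relation degenerates to the constraint $X1_K=1_K$. Every pair $(\bar Z^*-u1_K^\top,\,b^*+u)$ then gives the same $X^*$, and the prox identity $\bar Z=\mathrm{prox}(I_K-b1_K^\top)$ holds for each candidate $b$ without selecting one. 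You need an extra fact: $\|\bar Z(I_K-\frac{1_K1_K^\top}{K})\|_*<\|\bar Z\|_*$ whenever $\bar Z1_K\neq 0_K$. To see it, take $L=U\Sigma^{1/2}$ and $R=\Sigma^{1/2}V^\top$ from an SVD of $\bar Z$. Right-multiplying $R$ by the projector strictly decreases $\|R\|_{\F}$ unless $\bar Z1_K=0_K$. The substitution $(\bar Z,b)\mapsto(\bar Z(I_K-\frac{1_K1_K^\top}{K}),\,b+\bar Z\frac{1_K}{K})$ leaves $X$ unchanged and, since $c>0$, strictly lowers the objective unless $\bar Z1_K=0_K$. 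Hence every minimizer has $\bar Z^*1_K=0_K$, so $b^*=\frac{1_K}{K}$ and $\bar Z^*=X^*-b^*1_K^\top$ is unique. Alternatively, use the argument of Theorem~\ref{thm:importance_of_bias}, which gives $H^*1_N=0$ for $\lambda_b=0$.

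\textbf{$\mathcal{NC}3$ does not follow from balancing alone.} The balancing condition is $\lambda_W W^\top W=\lambda_H HH^\top$; the square root in your version is a slip. It gives only ${W^*}^\top W^*=\kappa^2\bar H^*\bar H^{*\top}$ with $\kappa=\sqrt{n\lambda_H/\lambda_W}$, i.e.\ ${W^*}^\top=\kappa\bar H^*Q$ for some orthogonal $Q$. That is $\mathcal{NC}3'$, and Proposition~\ref{prop:loss_of_self_duality} shows balancing cannot give more for a general encoding. To remove $Q$ you must use that $\bar Z^*=W^*\bar H^*=\kappa Q^\top\bar H^{*\top}\bar H^*$ is symmetric positive semidefinite. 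Your explicit form $\bar Z^*=(1-c)(I_K-\frac{1_K1_K^\top}{K})+\gamma\frac{1_K1_K^\top}{K}$ with $\gamma\geqslant 0$ supplies exactly this. Uniqueness of the polar factor then forces $Q^\top$ to act as the identity on the range of $\bar H^{*\top}$, so ${W^*}^\top=\kappa\bar H^*$. The $\mathcal{NC}3$ conclusion therefore has to come after the spectral computation, not before it.
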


Theorem \ref{thm:global_mins_one_hot_encoded} reveals a transition in the structure of $\bar H^*$ as $\lambda_b$ decreases, shifting from an OF (for $\lambda_b\to\infty$) to a simplex ETF (for $\lambda_b \leq c/(1-c)$). This observation suggests a link between OFs and simplex ETFs. Indeed,  centering any orthogonal frame $M = \alpha P$ gives $$M_c=\alpha P - (\alpha P)\frac{1_K1_K^\top}{K},$$ which, according to Definition \ref{def:simplex_ETF}, is a simplex ETF; see Figure \ref{fig:simplex_ETF_OF_shift} for an illustration. This led us to propose the following definition.
\begin{figure}
    \centering
    \includegraphics[width=0.8\linewidth]{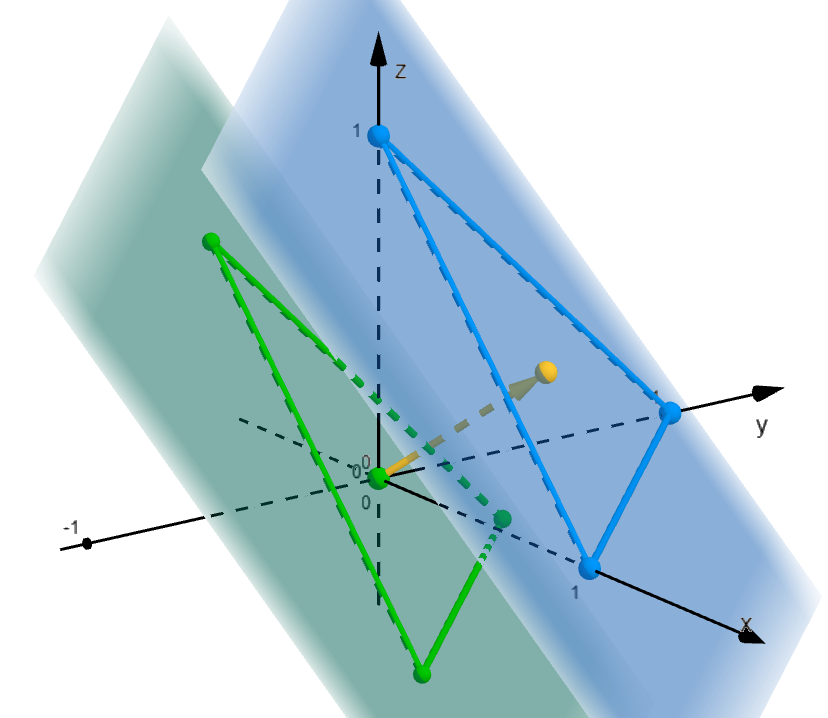}
    \caption{Centering an OF (blue) always results in a simplex ETF (green). For this figure, the parameters in Definitions \ref{def:simplex_ETF} and \ref{def:OF} are chosen as $d=K=3$, $P=I_K$ and $\alpha = 1$.}
    \label{fig:simplex_ETF_OF_shift}
\end{figure}

\begin{definition}(Shifted simplex ETF) \label{def:shifted_simplex_ETF}
A shifted simplex equiangular tight frame (SSETF) is a collection of $K$ vectors in $\R^d$ (with $d \geqslant K$) specified by the columns of \begin{equation*} \label{eq:shifted_simplex_ETF}
    M = \alpha P \left( I_K - \beta\frac{1_K 1_K^\top}{K} \right),
\end{equation*} where $P\in \R^{d\x K}$ is semi-orthogonal, $\alpha >0$ and $\beta \in \R$.
\end{definition}

This proposed structure generalizes both OFs ($\beta = 0$) and simplex ETFs ($\beta = 1$). Note that centering any SSETF leads to a simplex ETF. 

\begin{proposition}\label{prop:centering_SSETF}
If $M \in \R^{d\x K}$ is an SSETF, then $M_c = M\left(I_K-\frac{1_K1_K^\T}{K}\right)$ is a simplex ETF.
\end{proposition}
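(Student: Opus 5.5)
The plan is to compute directly from Definition~\ref{def:shifted_simplex_ETF}, using only that the centering matrix $J := I_K - \frac{1_K1_K^\T}{K}$ is an orthogonal projector. We write $M = \alpha P\left(I_K - \beta\frac{1_K1_K^\T}{K}\right)$ with $P$ semi-orthogonal, $\alpha>0$ and $\beta\in\R$. The goal is to show that $M_c = MJ$ has exactly the form $\alpha P J$ required by Definition~\ref{def:simplex_ETF}.

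The key step is the identity $\frac{1_K1_K^\T}{K}J = 0$. It holds because $1_K^\T J = 1_K^\T - \frac{(1_K^\T 1_K)1_K^\T}{K} = 0$. Expanding the product then gives
\begin{equation*}
M_c = \alpha P\left(J - \beta\frac{1_K1_K^\T}{K}J\right) = \alpha P J,
\end{equation*}
so the $\beta$-dependent shift is annihilated entirely. This expression matches Definition~\ref{def:simplex_ETF} with the same semi-orthogonal $P$ and the same scale $\alpha>0$.

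As a cross-check, we can also verify the equivalent Gram characterization. Since $P^\T P = I_K$ and $J$ is symmetric and idempotent, we get $M_c^\T M_c = \alpha^2 J P^\T P J = \alpha^2 J^2 = \alpha^2 J$.

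There is no real obstacle here; the only point needing care is that $\beta$ is an arbitrary real number. Because the shift is killed exactly by the projector, no condition on $\beta$ arises, for instance no requirement like $\beta\neq 1$. In particular, the OF case $\beta=0$ recovers the observation preceding Definition~\ref{def:shifted_simplex_ETF}.
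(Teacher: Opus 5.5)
Your proposal is correct and follows the same route as the paper: expand $M_c = \alpha P\left(I_K - \beta\frac{1_K1_K^\T}{K}\right)\left(I_K-\frac{1_K1_K^\T}{K}\right)$ and observe that the $\beta$-term vanishes, which leaves $\alpha P\left(I_K-\frac{1_K1_K^\T}{K}\right)$ for every $\beta$. You make the key identity $1_K^\T\left(I_K-\frac{1_K1_K^\T}{K}\right)=0$ explicit and add a Gram-matrix check, which the paper leaves implicit.
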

\begin{proof}
By Definition \ref{def:shifted_simplex_ETF}, $M = \alpha P ( I_K - \beta\frac{1_K 1_K^\top}{K} )$ for some $\alpha, \beta \in \R$. By definition, $M_c = M(I_K-\frac{1_K1_K^\T}{K})$, which simplifies to  $ M_c =  \alpha P(  I_K - \frac{1_K 1_K^\top}{K})$ for any $\beta$, i.e., $M_c$ is a simplex ETF. 
\end{proof}

Notice in \eqref{eq:bias_one_hot_encoded} how the optimal bias $b^*$ evolves as $\lambda_b$ decreases: it transitions continuously from $0_K$ in the bias-free UFM ($\lambda_b\to\infty)$ to $\frac{1}{K}1_K$ in the unregularized-bias UFM ($\lambda_b=0$), namely, the mean of the labels. In other words, for one-hot encoded labels, the bias in the unregularized-bias UFM compensates for the non-centering of the labels. 

\subsection{Arbitrary labels}

We generalize here the results of previous section to arbitrary label encoding $\bar Y\in\R^{K\times K}$ and imbalanced data. In this setting, the authors of \cite{Zhou2022} showed that, for the bias-free UFM, the optimal unconstrained feature matrix $H^*$ is closely related to the right singular vectors of $Y$, i.e., depends on the encoding.  
To address the UFM with bias \eqref{eq:UFM_gen_encoding}, we first derive optimality conditions in Lemma \ref{lem:optimality_conditions}, from which  Theorem \ref{thm:importance_of_bias} is proven.
\begin{lemma}\label{lem:optimality_conditions}
Let $\bar Y \in \R^{K \times K}$ be an arbitrary encoding matrix and let $Y=\bar Y C \in \R^{K \x N}$ for some class assignment matrix $C$, see \eqref{eq:encodings_assumption}. Then, any global minimizer $(W^*,H^*, b^*)$ of \eqref{eq:UFM_gen_encoding} satisfies the first-order optimality conditions:
\begin{align}
    YM_{\gamma}{H^*}^\T&=W^*H^*M_{\gamma}{H^*}^\T+\lambda_W N W^*, \label{eq:stat_cond_1} \\
    {W^*}^\T Y M_{\gamma}&={W^*}^\T W^* H^*M_{\gamma} + \lambda_H NH^*, \label{eq:stat_cond_2} \\
    b^* &= \frac{\gamma}{N}(Y-W^*H^*)1_N, \label{eq:stat_cond_3}
\end{align}
where $\gamma = \frac{1}{1+\lambda_b}$ and $M_{\gamma} = I_N -\gamma\frac{1_N1_N^\T}{N}$.
\end{lemma}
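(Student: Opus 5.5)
The objective in \eqref{eq:UFM_gen_encoding} is a smooth (polynomial) function of $(W,H,b)$ on an unconstrained domain. Hence any global minimizer is a stationary point, and the plan is to set the three partial gradients to zero and rearrange them. We first eliminate $b$ and then substitute back into the gradients with respect to $W$ and $H$; this is what produces the projector-like matrix $M_\gamma$.

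First, write $R := WH + b1_N^\T - Y$ for the residual, and note that $1_N^\T 1_N = N$. Setting the gradient with respect to $b$ to zero gives
\begin{equation*}
\frac{1}{N} R 1_N + \lambda_b b = 0 .
\end{equation*}
Expanding $R1_N = WH1_N + Nb - Y1_N$, this becomes $(1+\lambda_b)\, b = \frac{1}{N}(Y - WH)1_N$. Hence $b^* = \frac{\gamma}{N}(Y - W^*H^*)1_N$ with $\gamma = \frac{1}{1+\lambda_b}$, which is \eqref{eq:stat_cond_3}. The case $\lambda_b = 0$ is included, with $\gamma = 1$.

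Next, we substitute this $b^*$ into the residual:
\begin{align*}
R^* &= W^*H^* - Y + \gamma (Y - W^*H^*)\frac{1_N1_N^\T}{N} \\
&= (W^*H^* - Y)\left(I_N - \gamma\frac{1_N1_N^\T}{N}\right) \\
&= (W^*H^* - Y) M_\gamma .
\end{align*}
The gradient with respect to $W$ is $\frac{1}{N}R{H}^\T + \lambda_W W$. Setting it to zero and multiplying by $N$ gives $(W^*H^* - Y)M_\gamma {H^*}^\T + \lambda_W N W^* = 0$, which rearranges to \eqref{eq:stat_cond_1}. Similarly, the gradient with respect to $H$ is $\frac{1}{N}W^\T R + \lambda_H H$, and setting it to zero yields ${W^*}^\T(W^*H^* - Y)M_\gamma + \lambda_H N H^* = 0$, i.e. \eqref{eq:stat_cond_2}.

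The computation is routine; the one point needing care is the elimination of $b$. Its substitution must be done in the right order, so that $M_\gamma$ multiplies the residual from the right in both remaining conditions. Existence of a global minimizer is not needed, since the statement only concerns properties of any global minimizer, should one exist. Coercivity ($\lambda_W,\lambda_H>0$) does in fact guarantee existence in $(W,H)$, and $b$ is determined by them. The structure of $Y = \bar Y C$ plays no role in this derivation; it will only be used in the subsequent analysis of Theorem \ref{thm:importance_of_bias}.
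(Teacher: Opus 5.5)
Your proposal is correct and follows the same route as the paper: setting the $b$-gradient to zero gives $(1+\lambda_b)b^*=\frac{1}{N}(Y-W^*H^*)1_N$, which is \eqref{eq:stat_cond_3}; substituting this into the residual gives $(W^*H^*-Y)M_\gamma$, and setting the $W$- and $H$-gradients to zero then yields \eqref{eq:stat_cond_1} and \eqref{eq:stat_cond_2}. Your explicit factorization of the residual fills in the algebra that the paper's short proof leaves implicit.
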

\begin{proof}
Since \eqref{eq:UFM_gen_encoding} is strictly convex and coercive in $b$, setting the gradient w.r.t. $b$ of \eqref{eq:UFM_gen_encoding} to $0$ allows us to find $b^*$, leading to \eqref{eq:stat_cond_3}. Then, requiring the gradients w.r.t. $W$ and $H$ to be $0$ give the stationarity conditions \eqref{eq:stat_cond_1} and \eqref{eq:stat_cond_2}, after plugging in \eqref{eq:stat_cond_3}. As a result, any global minimizer $(W^*, H^*, b^*)$ of \eqref{eq:UFM_gen_encoding} must satisfy \eqref{eq:stat_cond_1}-\eqref{eq:stat_cond_3}.
\end{proof}

\begin{theorem}\label{thm:importance_of_bias}
    Let $\bar Y \in \R^{K \times K}$ be an arbitrary encoding matrix and let $Y=\bar Y C \in \R^{K \x N}$ for some class assignment matrix $C$, see \eqref{eq:encodings_assumption}. 
    \begin{itemize}
        \item The UFM \eqref{eq:UFM_gen_encoding} with $\lambda_b=0$ has optimal bias $b^*=Y\frac{1_N}{N}$. It reduces to the UFM with no bias for centered labels $Y-Y\frac{1_N1_N^\top}{N}$.
        \item If $Y$ is centered, i.e., $Y1_N=0_K$, then for all $\lambda_b\geqslant 0$, the optimal bias of the UFM \eqref{eq:UFM_gen_encoding} is $b^*=0_K$.
    \end{itemize}
\end{theorem}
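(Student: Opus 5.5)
The plan is to treat the bias $b$ as a variable that can be eliminated in closed form: for fixed $(W,H)$, minimise over $b$ and substitute back. This is the computation behind \eqref{eq:stat_cond_3} in Lemma \ref{lem:optimality_conditions}. Write $R:=Y-WH$. The objective is strictly convex and coercive in $b$ for $\lambda_b\geqslant 0$, because the data-fit term contributes $\frac{1}{2}\|b\|_2^2$ (using $\|b1_N^\T\|_\F^2=N\|b\|_2^2$). Setting the gradient to zero therefore gives the unique minimiser
\[
b(W,H)=\frac{\gamma}{N}R1_N,\qquad \gamma=\frac{1}{1+\lambda_b}.
\]
Any global minimiser $(W^*,H^*,b^*)$ must then have $b^*=b(W^*,H^*)$.

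For the first bullet, take $\lambda_b=0$, so $\gamma=1$ and $b=R\frac{1_N}{N}$. Plugging this back gives the residual
\[
R-R\frac{1_N1_N^\T}{N}=RM_1=(Y-WH)M_1,\qquad M_1=I_N-\frac{1_N1_N^\T}{N}.
\]
This equals $(Y-Y\frac{1_N1_N^\T}{N})-WHM_1$, so the reduced problem in $(W,H)$ is
\[
\min_{W,H}\frac{1}{2N}\|\tilde Y-WHM_1\|_\F^2+\frac{\lambda_W}{2}\|W\|_\F^2+\frac{\lambda_H}{2}\|H\|_\F^2,\qquad \tilde Y:=Y-Y\frac{1_N1_N^\T}{N}.
\]
This is almost the bias-free UFM with centered labels $\tilde Y$, except that $H$ appears through $HM_1$.

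The main obstacle is to show that optimal $H$ is itself centered, so that $HM_1=H$. Decompose $H=HM_1+H\frac{1_N1_N^\T}{N}$. These two parts are orthogonal in the Frobenius inner product, since $M_1$ is an orthogonal projector. Replacing $H$ by $HM_1$ leaves the fit term unchanged and does not increase $\|H\|_\F^2$, with strict decrease unless $H1_N=0$. Hence every minimiser has $H^*1_N=0$. The reduced problem is therefore exactly the bias-free UFM with labels $\tilde Y$, because any candidate in that bias-free problem can also be centered without loss.

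Next I identify the optimal bias. With $H^*1_N=0$, the formula above gives $b^*=\frac{1}{N}(Y-W^*H^*)1_N=Y\frac{1_N}{N}$.

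For the second bullet, let $\lambda_b\geqslant 0$ be arbitrary and suppose $Y1_N=0$. The same projection argument applies without change. For fixed $W$ and $b$, the fit term is $\frac{1}{2N}\|(Y-WH)M_\gamma\|_\F^2$ plus terms in $b$, and it depends on $H$ only through $HM_1$ and $H1_N$. More directly, since $Y1_N=0$, replacing $(H,b)$ by $(HM_1,b)$ gives
\[
\|WHM_1+b1_N^\T-Y\|_\F^2=\|(WH-Y)M_1\|_\F^2+\frac{1}{N}\|b\,N-(Y-WH)1_N\cdot 0\|^2\text{-type terms}.
\]
The cleaner route is to use the orthogonal split of the residual into its $M_1$ component and its $\frac{1_N1_N^\T}{N}$ component:
\[
\|WH+b1_N^\T-Y\|_\F^2=\|(WH-Y)M_1\|_\F^2+N\|WH\tfrac{1_N}{N}+b\|_2^2,
\]
where the second term uses $Y1_N=0$. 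Setting $H\to HM_1$ and $b\to0$ zeroes the second term and does not increase either regulariser. Strictness forces $H^*1_N=0$ and $b^*=0_K$, which is consistent with $b^*=\frac{\gamma}{N}(Y-W^*H^*)1_N=0$.
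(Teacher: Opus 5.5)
Your proof is correct, but it takes a different route from the paper. The paper works from the first-order conditions of Lemma~\ref{lem:optimality_conditions}. Multiplying \eqref{eq:stat_cond_2} on the right by $1_N$ and using $M_\gamma 1_N=(1-\gamma)1_N$ gives $(1-\gamma){W^*}^\T Y1_N=((1-\gamma){W^*}^\T W^*+\lambda_H N I_d)H^*1_N$. When $\gamma=1$ or $Y1_N=0_K$ the left side vanishes, and invertibility forces $H^*1_N=0$. Then \eqref{eq:stat_cond_3} gives $b^*=\frac{\gamma}{N}Y1_N$ for both bullets at once, and the reduction follows by substituting this $b^*$ into the objective.

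You instead argue by direct comparison. After eliminating $b$, or via the orthogonal split along $M_1$ and $\frac{1_N1_N^\T}{N}$, you show that centering $H$ (and setting $b=0_K$ in the centered-label case) never increases the data-fit term. It strictly decreases $\frac{\lambda_H}{2}\|H\|_\F^2$ unless $H1_N=0$.

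The paper's route buys generality: since it uses only stationarity, the same computation gives $H1_N=0$ at every critical point, not only at global minimizers. Yours is elementary and needs no gradients in $(W,H)$. It also makes visible that $H1_N$ only affects the mean component of the residual, which the bias can fit at no cost to $\|H\|_\F$. Finally, it turns the ``reduces to'' claim into an explicit equality of minimizer sets: the reduced problem and the bias-free UFM with labels $\tilde Y$ both have only centered minimizers and agree on centered $H$. For that last point, write out the line you skip: $\|WH-\tilde Y\|_\F^2=\|WHM_1-\tilde Y\|_\F^2+\|WH\frac{1_N1_N^\T}{N}\|_\F^2$, which holds because $\tilde YM_1=\tilde Y$.

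On presentation, remove the scratch text before ``The cleaner route'' in the second bullet. For fixed $b$, the fit term is not $\frac{1}{2N}\|(Y-WH)M_\gamma\|_\F^2$ plus terms in $b$ only, since the remaining terms also involve $H$. The display containing ``$\cdot 0$'' and ``-type terms'' is not a valid identity. The decomposition $\|WH+b1_N^\T-Y\|_\F^2=\|(WH-Y)M_1\|_\F^2+N\|WH\frac{1_N}{N}+b\|_2^2$ carries the argument on its own. Comparing $(W^*,H^*,b^*)$ with $(W^*,H^*M_1,0_K)$ first forces $H^*1_N=0$, and then, with $H^*1_N=0$, forces $b^*=0_K$.
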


\begin{proof}
    In Lemma \ref{lem:optimality_conditions}, $M_{\gamma}$ satisfies $M_{\gamma}1_N = (1-\gamma)1_N$. Multiplying \eqref{eq:stat_cond_2} on the right by $1_N$ thus gives
    \begin{equation*}
        (1-\gamma){W^*}^\top Y1_N=((1-\gamma){W^*}^\top W^*+\lambda_H N I_d)(H^*1_N).
    \end{equation*} 
    If $\lambda_b=0$ (i.e., $\gamma=1$) or $Y$ is centered, the left-hand side of the above equation vanishes. Moreover, since $(1-\gamma){W^*}^\top W^*+\lambda_H N I_d$ is always invertible for $\lambda_H>0$ and $\lambda_b\geqslant 0$, it follows that $H^*1_N=0_K$ in both cases. Hence, by \eqref{eq:stat_cond_3}, $b^*=\frac{\gamma}{N}Y1_N$. On the one hand, if $\lambda_b=0$, the bias simplifies to $b^*=\frac{1}{N}Y1_N$ and the objective in \eqref{eq:UFM_gen_encoding} becomes
    \begin{align*}
        \frac{1}{2N}\biggl\|WH+Y\frac{1_N1_N^\T}{N}-Y\biggr\|_{\F}^2 + \frac{\lambda_W}{2}\|W\|_{\F}^2 + \frac{\lambda_H}{2}\|H\|_{\F}^2.
    \end{align*}
    Thus, solving the UFM with $\lambda_b=0$ and labels $Y$ reduces to solving the bias-free UFM with centered labels $Y-Y\frac{1_N1_N^\T}{N}$. On the other hand, if $Y$ is centered, i.e., $Y1_N= 0_K$, then $b^* = 0_K$, regardless of the value of $\lambda_b\geqslant 0$.
\end{proof}

Theorem \ref{thm:importance_of_bias} confirms that the bias in the UFM aims at centering the labels beyond one-hot encoding and balanced data.

\section{Impact of encoding on NC properties}
\label{section:NC_properties}
We explore in this section the impact of the encoding on the NC properties, from the viewpoint of the UFM with MSE loss. First, we show that variability collapse $\mathcal{NC}1$ holds for any encoding. Then, for balanced scenarios, we relate convergence of the centered features to a simplex ETF (i.e., $\mathcal{NC}2$) to SSETF encodings and demonstrate that a weaker version of self-duality, denoted $\mathcal{NC}3'$, is preserved regardless of the encoding.

\subsection{Robustness of variability collapse  ($\mathcal{NC}1$)}

$\mathcal{NC}1$ has been shown to hold for multiple instances of the UFM under MSE loss, such as in the bias-free case with one-hot encoded labels and imbalanced data \cite{Liu2024} or for $\lambda_b\geqslant 0$, with one-hot labels and balanced data \cite{Zhou2022}. To our knowledge, our next result is the first to ensure that $\mathcal{NC}1$ holds more generally for any bias regularization $\lambda_b \geqslant 0$ and for any encoding as well as for imbalanced scenarios. The independence of variability collapse on the encoding and on the value of $\lambda_b$ can be explained intuitively: assuming that the model has found an optimal feature representation for a sample of a given class, then all samples of the same class should be represented identically, because of the separability of the problem regarding the samples.

\begin{theorem}\label{thm:variabililty_collapse}
Let $\bar Y \in \R^{K \times K}$ be an arbitrary encoding matrix and let $Y=\bar Y C \in \R^{K \x N}$ for some class assignment matrix $C$, see \eqref{eq:encodings_assumption}. Assume that classes are balanced, and let $(W^*, H^*, b^*)$ be an optimal solution of \eqref{eq:UFM_gen_encoding}. Then, for any $\lambda_H, \lambda_W >0$ and $\lambda_b\geqslant 0$, $H^*$ satisfies $\mathcal{NC}1$, i.e., \begin{equation}\label{eq:H_bar_def}
    H^* = \Bar{H}^*C
\end{equation} for some $\Bar{H}^*\in\R^{d\times K}$.
\end{theorem}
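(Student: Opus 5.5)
The plan is to fix the optimal $W^*$ and $b^*$ and observe that the objective in \eqref{eq:UFM_gen_encoding} then separates across the columns of $H$. Write $h_i$ for the $i$-th column of $H$, and note that $y_i=\bar y_k$ for every sample $i$ in class $k$. With $(W^*,b^*)$ fixed, the dependence on $H$ is
\begin{equation*}
\sum_{i=1}^N \left( \frac{1}{2N}\|W^*h_i+b^*-y_i\|_2^2+\frac{\lambda_H}{2}\|h_i\|_2^2\right).
\end{equation*}
If $(W^*,H^*,b^*)$ is a global minimizer, then $H^*$ must minimize this sum with $W^*,b^*$ fixed. Otherwise, replacing $H^*$ by a better $H$ would strictly decrease the full objective, which is a contradiction. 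Because the sum is separable, each column $h_i^*$ must minimize its own summand
\begin{equation*}
\phi_k(h):=\frac{1}{2N}\|W^*h+b^*-\bar y_k\|_2^2+\frac{\lambda_H}{2}\|h\|_2^2 .
\end{equation*}

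The key step is that $\phi_k$ is strictly convex, since $\lambda_H>0$: its Hessian is $\frac{1}{N}{W^*}^\top W^*+\lambda_H I_d\succ 0$. Hence $\phi_k$ has a unique minimizer,
\begin{equation*}
\bar h_k^*=\left({W^*}^\top W^*+\lambda_H N I_d\right)^{-1}{W^*}^\top(\bar y_k-b^*).
\end{equation*}
This minimizer depends only on the class label $\bar y_k$, not on the sample index. Therefore every sample $i$ belonging to class $k$ satisfies $h_i^*=\bar h_k^*$. Setting $\bar H^*:=[\bar h_1^*\cdots\bar h_K^*]$ and using the ordering of samples encoded in $C$ yields $H^*=\bar H^*C$, which is \eqref{eq:H_bar_def}.

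Equivalently, one could read this off from \eqref{eq:stat_cond_2} in Lemma \ref{lem:optimality_conditions}. However, that route has to handle the coupling term $H^*M_\gamma$ through the mean $H^*1_N$. I therefore prefer the direct partial-minimization argument, which sidesteps the coupling altogether.

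I expect no serious obstacle. The only point needing care is to justify that a global minimizer is also a partial minimizer in $H$ for fixed $(W^*,b^*)$, and that uniqueness comes from strict convexity in each column and not from any property of the encoding. This also shows that the balancedness assumption is not actually used: the argument goes through for arbitrary $n_k$ and any $\lambda_b\geqslant 0$.
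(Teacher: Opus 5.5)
Your proof is correct. It reaches the conclusion by a somewhat different mechanism than the paper.

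\textbf{The paper's route.} The paper never fixes $(W^*,b^*)$. It works class by class with Jensen's inequality. Replacing each $h_{k,i}$ by the class mean $\bar h_k$ cannot increase the fit term $\sum_i\|Wh_{k,i}+b-\bar y_k\|_2^2$. It also strictly decreases $\sum_i\|h_{k,i}\|_2^2$ unless the $h_{k,i}$ already coincide, so a non-collapsed point is never optimal.

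\textbf{Your route.} You pass to the $H$-subproblem at $(W^*,b^*)$ and note that it splits into per-sample problems whose objective $\phi_k$ depends only on the class label. Strict convexity then forces all samples of class $k$ to share the unique minimizer.

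\textbf{Comparison.} Both proofs rest on the same two ingredients: separability over samples, and strict convexity coming from $\lambda_H>0$. As you correctly observe, neither uses balancedness; the paper's proof is also written for general $n_k$.

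The averaging argument says something about every point $(W,H,b)$, not only minimizers: collapsing to class means is always a weak improvement, and nothing has to be computed.

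Your route avoids reasoning about the equality cases of two separate Jensen inequalities. It also yields the explicit relation $\bar H^*=({W^*}^\top W^*+\lambda_H N I_d)^{-1}{W^*}^\top(\bar Y-b^*1_K^\top)$, using $1_N^\top=1_K^\top C$. This is exactly the $H$-stationarity condition before $b^*$ is eliminated via \eqref{eq:stat_cond_3}. It explains why the coupling through $H^*1_N$ that you wanted to avoid in \eqref{eq:stat_cond_2} only appears after that substitution.
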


\begin{proof}
Let $h_{k,i}$ be the feature vector corresponding to the $i^\textnormal{th}$ sample of class $k$, with $1\leqslant i \leqslant n_k$ and $1\leqslant k \leqslant K$. We decompose the two terms involving $H$ in \eqref{eq:UFM_gen_encoding} as
\begin{align*}
    \frac{1}{2N}\sum_{k=1}^K\sum_{i=1}^{n_k}\|Wh_{k,i} + b -\Bar{y}_k\|_{2}^2 + \frac{\lambda_H}{2}\sum_{k=1}^K\sum_{i=1}^{n_k}\|h_{k,i}\|_{2}^2.
\end{align*}
For the $k^{th}$ sum of the second term, we get
\begin{equation} \label{eq:decomp_1}
    \sum_{i=1}^{n_k}\|h_{k,i}\|_{2}^2 \geqslant \frac{1}{n_k}\biggl\|\sum_{i=1}^{n_k}h_{k,i}\biggr\|_{2}^2 =  n_k\|\bar{h}_k\|_{2}^2,
\end{equation}
where we first used Jensen's inequality applied to the convex function $f(x)=\|x\|_2^2$ for $x\in \R^{d}$, and where we defined $\bar{h}_k := (\sum_{i=1}^{n_k}h_{k,i})/n_k$. Note that, for each class $k$, equality can always be reached in \eqref{eq:decomp_1} and holds if and only if $h_{k,i}= \bar{h}_k$ for all $1\leqslant i\leqslant n_k$. Then, we apply the same reasoning for the $k^{\textnormal{th}}$ sum of the first term:
\begin{equation}\label{eq:decomp_2}
    \sum_{i=1}^{n_k}\|Wh_{k,i} + b -\Bar{y}_k\|_{2}^2 \geqslant  n_k\|W\bar{h}_k + b -\Bar{y}_k\|_{2}^2,
\end{equation}
with equality in \eqref{eq:decomp_2} if and only if $Wh_{k,i} + b -\bar{y}_k = W\bar{h}_k + b - \bar{y_k}$ for all $1\leqslant i \leqslant n_k$, or equivalently $Wh_{k,i}  = W\bar{h}_k$. These conditions are weaker than the previously derived conditions $h_{k,i}= \bar{h}_k$, and consequently do not impose any additional constraints on $H$. Then, any optimal $H^*$ in \eqref{eq:UFM_gen_encoding} can be written as $H^* = \Bar{H}^*C$ with $\Bar{H}^* = [\bar h_1 \cdots \bar h_K]$.  
\end{proof}

\subsection{Convergence to a simplex ETF ($\mathcal{NC}2$) for SSETF encodings}

Currently, the complete determination of which encodings result in $\mathcal{NC}2$ for the UFM with MSE loss remains an open question, both in the balanced and imbalanced cases. We next show that a sufficient condition for $\mathcal{NC}2$ is the fact that the optimal mean activations $\Bar{H}^*$ form an SSETF; this directly follows from Proposition \ref{prop:centering_SSETF}. 

\begin{corollary}\label{cor:NC2}
Let $\bar Y\in \R^{K\x K}$ be an arbitrary class encoding matrix. If, for any $\lambda_W, \lambda_H>0$ and $\lambda_b \geqslant 0$, the optimal mean features $\Bar{H}^*$ of \eqref{eq:UFM_gen_encoding} defined in  \eqref{eq:H_bar_def} form an SSETF, then their centered counterpart $\Bar{H}^*_c =\Bar{H}^* (I_K-\frac{1_K 1_K^\top}{K})$ form a simplex ETF, in agreement with $\mathcal{NC}2$.
\end{corollary}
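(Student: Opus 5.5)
The corollary follows directly from Proposition \ref{prop:centering_SSETF}, applied to the optimal mean features given by Theorem \ref{thm:variabililty_collapse}. The plan has two steps.

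First, fix arbitrary $\lambda_W,\lambda_H>0$ and $\lambda_b\geqslant 0$, and take any optimal solution $(W^*,H^*,b^*)$ of \eqref{eq:UFM_gen_encoding}. Theorem \ref{thm:variabililty_collapse} (in the balanced setting implicit in \eqref{eq:H_bar_def}) gives $H^*=\bar H^* C$, so $\bar H^*\in\R^{d\times K}$ is well defined as the matrix of class means. By hypothesis, $\bar H^*$ is an SSETF. Hence there exist a semi-orthogonal $P\in\R^{d\times K}$, a scalar $\alpha>0$ and a scalar $\beta\in\R$ with
\begin{equation*}
\bar H^*=\alpha P\left(I_K-\beta\frac{1_K1_K^\top}{K}\right).
\end{equation*}

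Second, apply Proposition \ref{prop:centering_SSETF} with $M=\bar H^*$. It yields $\bar H^*_c=\bar H^*(I_K-\frac{1_K1_K^\top}{K})=\alpha P(I_K-\frac{1_K1_K^\top}{K})$, which is a simplex ETF by Definition \ref{def:simplex_ETF}. The underlying identity is that $(I_K-\beta\frac{1_K1_K^\top}{K})(I_K-\frac{1_K1_K^\top}{K})=I_K-\frac{1_K1_K^\top}{K}$, because $\frac{1_K1_K^\top}{K}$ is idempotent. It also remains to note that, for balanced classes, right multiplication by $I_K-\frac{1_K1_K^\top}{K}$ is exactly centering by the global mean $\frac{1}{N}H^*1_N=\frac{1}{K}\bar H^*1_K$. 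Indeed, $C1_N=n1_K$, so the global feature mean coincides with the average of the class means. This is precisely the centering prescribed in $\mathcal{NC}2$.

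There is no real obstacle here. The only point needing care is the last one: checking that the centering operation in the corollary matches the one in $\mathcal{NC}2$, which relies on balancedness. The parameters $\alpha$, $\beta$ and $P$ may depend on $(\lambda_W,\lambda_H,\lambda_b)$, but this is harmless because the argument is pointwise in these parameters.
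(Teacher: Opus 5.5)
Your proposal is correct and matches the paper's argument: the corollary is an immediate application of Proposition~\ref{prop:centering_SSETF} to $M=\bar H^*$, resting on the idempotence identity you wrote. Your extra check, that for balanced classes right-multiplication by $I_K-\frac{1_K1_K^\top}{K}$ is exactly centering by the global feature mean $\frac{1}{N}H^*1_N$, is a worthwhile detail the paper leaves implicit.
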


This corollary shows that the optimal solutions of the UFM with MSE loss for balanced one-hot encoded data, provided in Theorem \ref{thm:global_mins_one_hot_encoded}, satisfy $\mathcal{NC}2$, as empirically observed by the authors of \cite{Tirer2022}. Future work will aim at deriving necessary and sufficient conditions on the encoding matrix $\bar Y\in \R^{K\x K}$ to result in mean features $\Bar{H}^*$ with an SSETF structure, and consequently, to lead in $\mathcal{NC}2$. We conjecture that, in balanced scenarios, $\Bar{H}^*\in \R^{d\x K}$ is an SSETF if and only if the encoding $\bar{Y}\in \R^{K \x K}$ is an SSETF. The family of SSETFs encompasses two widely used encodings: one-hot encoding (for $\beta=0$, $\alpha=1$, $d=K$ and $P=I_K$ in Definition \ref{def:shifted_simplex_ETF}), and label smoothing, a ``smoothed'' variant of one-hot encoding. Indeed, smoothed labels can be written as $\Bar{Y}^{\delta} = (1-\delta)I_K + \frac{\delta}{K}1_K1_K^\T$ for some smoothing parameter $0<\delta <1$ \cite{Szegedy2016,Guo2025}, and form an SSETF; simply take $\alpha= 1-\delta$, $\beta = \frac{\delta}{1-\delta}$, $d=K$ and $P=I_K$ in Definition \ref{def:shifted_simplex_ETF}. While we showed earlier that $\mathcal{NC}2$ holds for one-hot encoded labels (see Corollary \ref{cor:NC2}), this is still unproven for label smoothing, though empirical results supporting \ref{cor:NC2} were obtained using the cross-entropy loss \cite{Guo2025}.

\subsection{Failure of self-duality ($\mathcal{NC}3$) and success of a weaker form of self-duality ($\mathcal{NC}3'$)}

For balanced data and one-hot encoded labels, Theorem \ref{thm:global_mins_one_hot_encoded} showed that ${W^*}^\T$ and $\bar H^*$ are necessarily aligned, i.e., the global minimizers of the UFM satisfy $\mathcal{NC}3$. We show here that $\mathcal{NC}3$ may fail, depending on the encoding. In particular, applying an orthogonal transformation $Q$ to the labels destroys the alignmnent between ${W^*}^\T$ and $\bar H^*$.

\begin{proposition}\label{prop:loss_of_self_duality}
    Let $\bar Y \in \R^{K \times K}$ be arbitrary and let $Y=\bar Y C \in \R^{K \x N}$ for some class assignment matrix $C$, see \eqref{eq:encodings_assumption}. Let $Q\in\R^{K\times K}$ be an orthogonal matrix and let $Y_Q=QY$. Then, the global minimizers of the UFM with labels $Y_Q$ are of the form $(QW^*, H^*, Qb^*)$, where $(W^*, H^*, b^*)$ is any global minimizer of the UFM with labels $Y$.
\end{proposition}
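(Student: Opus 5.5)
The plan is to show that the change of variables $\Phi_Q:(W,H,b)\mapsto(QW,H,Qb)$ is a bijection of the feasible set $\R^{K\x d}\x\R^{d\x N}\x\R^K$ that leaves the objective values matched between the two problems. Concretely, we prove that the objective of \eqref{eq:UFM_gen_encoding} with labels $Y_Q$ at $(QW,H,Qb)$ equals the objective with labels $Y$ at $(W,H,b)$. Global minimizers are then mapped onto global minimizers, which gives the claim.

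The key computation is short. For the data-fitting term, note that $QWH+Qb1_N^\T-QY=Q(WH+b1_N^\T-Y)$. Since $Q$ is orthogonal, the Frobenius norm is left-invariant, i.e. $\|QA\|_\F^2=\operatorname{tr}(A^\T Q^\T QA)=\|A\|_\F^2$. Hence the loss term is unchanged. For the regularizers, $\|QW\|_\F=\|W\|_\F$ and $\|Qb\|_2=\|b\|_2$ by the same invariance, and the term $\|H\|_\F$ is untouched. Denoting by $f_Y$ and $f_{Y_Q}$ the two objectives, this yields $f_{Y_Q}(QW,H,Qb)=f_Y(W,H,b)$ for every triple.

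Next we derive the correspondence of minimizers. If $(W^*,H^*,b^*)$ minimizes $f_Y$, then for any $(W',H',b')$ we have
\begin{equation*}
f_{Y_Q}(W',H',b')=f_Y(Q^\T W',H',Q^\T b')\geqslant f_Y(W^*,H^*,b^*)=f_{Y_Q}(QW^*,H^*,Qb^*),
\end{equation*}
so $(QW^*,H^*,Qb^*)$ is a global minimizer for $Y_Q$. Conversely, applying the same argument with $Q^\T$ (also orthogonal, with $Q^\T Y_Q=Y$) shows that every global minimizer of the $Y_Q$ problem has the form $(QW^*,H^*,Qb^*)$ with $(W^*,H^*,b^*)=(Q^\T W',H',Q^\T b')$ a global minimizer for $Y$. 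The two minimizer sets are therefore in bijection, which is exactly the ``are of the form'' statement.

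There is no real obstacle here. The only point requiring care is to state both inclusions, so that \emph{all} minimizers for $Y_Q$ are covered, not just that some are. Existence of minimizers is not needed, since the statement is about the correspondence between minimizer sets; coercivity of the objective nonetheless guarantees they exist. In the subsequent discussion, this lets one read off the loss of $\mathcal{NC}3$: $(QW^*)^\T=W^{*\T}Q^\T$ is no longer proportional to $\bar H^*$ in general.
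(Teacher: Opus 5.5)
Your proof is correct and follows essentially the same route as the paper: the change of variables $(W,H,b)\mapsto(QW,H,Qb)$, invariance of the Frobenius and Euclidean norms under $Q$, and invertibility of $Q$ to obtain a bijection between the two sets of global minimizers. You just write out explicitly the two inclusions that the paper covers in one line by noting that the change of variables is reversible.
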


\begin{proof}
    Replacing $(W, H, b)$ in the UFM \eqref{eq:UFM_gen_encoding} with modified labels $Y_Q$ by $(QW', H, Qb')$ and using the invariance of the Frobenius norm under multiplication with an orthogonal matrix yields the result. Since $Q$ is invertible, this change of variables is reversible, providing a bijection between the set of global minimizers of the original problem and those of the problem with new labels.
\end{proof}

The optimal classification weights $W^*$ (and the bias $b^*$) directly follow the encoding $Y$ undergoing any left orthogonal transformation, indicating that the left singular vectors of $W^*$ are strongly related to the ones of $\Bar{Y}$. This observation matches the result from \cite{Zhou2022} in the bias-free case. Meanwhile, the mean class features $\Bar{H}^*$ stay unchanged, so the self-duality between ${W^*}^\T$ and $\Bar{H}^*$ is broken. However, inverting the isometry reconstructs their alignment, indicating that the property is not denied at a more fundamental level. Based on this remark, we propose the following weaker variant of $\mathcal{NC}3$, that we will show to hold in a broader setting. 
\begin{itemize}
    \item $\mathcal{NC}3'$ (Variant of $\mathcal{NC}3$): the last layer's classifier weights converge to the class means, up to rescaling and rotation/reflections.
\end{itemize}

Mathematically, $\mathcal{NC}3'$ is satisfied if and only if there exists an orthogonal matrix $Q\in \R^{K \x K}$ and a scalar $\kappa>0$ such that 
\begin{equation} \label{eq:NC3bis}
    {W^*}^\T = \kappa \Bar{H}^*Q^\T.
\end{equation}
This equality can be equivalently stated in terms of the Gram matrices of ${W^*}^{\T}$ and $\Bar{H}^*$:
\begin{equation*}
    {W^*}^\T W^* = \kappa^2 \Bar{H}^*{\Bar{H}}^{*^\T}.
\end{equation*} 

As stated in Theorem \ref{thm:Gram_self_duality}, $\mathcal{NC}3'$ is always verified in the UFM with MSE loss and balanced classes. Besides, the theorem implies that there always exists an orthogonal matrix $Q\in \R^{K\x K}$ such that it reverts to the original self-duality property $\mathcal{NC}3$ when considering the labels $Q^\T Y$. 

\begin{theorem}\label{thm:Gram_self_duality}
Let $\bar Y\in \R^{K\x K}$ be an arbitrary class encoding matrix, and assume that the data are balanced. Then, for any $\lambda_W, \lambda_H>0$ and $\lambda_b \geqslant 0$, the optimal mean features $\Bar{H}^*$ of \eqref{eq:UFM_gen_encoding} defined in \eqref{eq:H_bar_def} satisfy $\mathcal{NC}3'$, with scaling coefficient $\kappa=\sqrt{\frac{\lambda_Hn}{\lambda_W}}$ in \eqref{eq:NC3bis}. 
\end{theorem}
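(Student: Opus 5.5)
We prove the Gram identity ${W^*}^\T W^*=\kappa^2\bar H^*\bar H^{*\T}$ with $\kappa^2=\frac{\lambda_H n}{\lambda_W}$, using the first-order conditions of Lemma~\ref{lem:optimality_conditions}. As noted before the theorem, this identity is equivalent to $\mathcal{NC}3'$.

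By Theorem~\ref{thm:variabililty_collapse}, $H^*=\bar H^*C$. Balanced classes give $CC^\T=nI_K$.

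Multiply \eqref{eq:stat_cond_1} on the left by ${W^*}^\T$:
\begin{equation*}
{W^*}^\T Y M_\gamma {H^*}^\T={W^*}^\T W^*H^*M_\gamma{H^*}^\T+\lambda_W N\,{W^*}^\T W^*.
\end{equation*}
Multiply \eqref{eq:stat_cond_2} on the right by ${H^*}^\T$:
\begin{equation*}
{W^*}^\T Y M_\gamma {H^*}^\T={W^*}^\T W^*H^*M_\gamma{H^*}^\T+\lambda_H N\,H^*{H^*}^\T.
\end{equation*}
The two left-hand sides coincide, and so do the first terms on the right. Subtracting the two equations leaves
\begin{equation*}
\lambda_W\,{W^*}^\T W^*=\lambda_H\,H^*{H^*}^\T .
\end{equation*}
This is the standard balancedness identity for factorized objectives. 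Substituting $H^*=\bar H^*C$ gives $H^*{H^*}^\T=\bar H^*CC^\T\bar H^{*\T}=n\,\bar H^*\bar H^{*\T}$. Hence ${W^*}^\T W^*=\frac{\lambda_H n}{\lambda_W}\bar H^*\bar H^{*\T}$.

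It remains to pass from equal Gram matrices to \eqref{eq:NC3bis}. Write $A={W^*}^\T\in\R^{d\times K}$ and $B=\kappa\bar H^*\in\R^{d\times K}$, so $AA^\T=BB^\T$. Take compact SVDs $A=U\Sigma V_A^\T$ and $B=U\Sigma V_B^\T$. These can share $U$ and $\Sigma$ because $AA^\T=BB^\T$ fixes the left singular subspaces and the singular values. Here $V_A,V_B\in\R^{K\times r}$ have orthonormal columns.

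Complete $V_A$ and $V_B$ to orthogonal $K\times K$ matrices $\tilde V_A$ and $\tilde V_B$, and set $Q=\tilde V_A\tilde V_B^\T$, which is orthogonal. Then $BQ^\T=U\Sigma V_B^\T\tilde V_B\tilde V_A^\T=U\Sigma V_A^\T=A$, which is exactly \eqref{eq:NC3bis}. Repeated singular values cause no trouble: any orthonormal basis $U$ of each eigenspace of $AA^\T$ serves for both $A$ and $B$, because the right factor is then defined as $V^\T=\Sigma^{-1}U^\T(\cdot)$.

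The main point needing care is that the lemma's conditions hold at every global minimizer for every $\lambda_b\geqslant0$, including after $b^*$ has been eliminated. This holds because $M_\gamma$ enters both conditions in the same way, so it cancels for every $\gamma\in(0,1]$.
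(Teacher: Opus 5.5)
Your proof is correct and follows the paper's argument. You multiply the two stationarity conditions by ${W^*}^\T$ and ${H^*}^\T$, subtract to get $\lambda_W{W^*}^\T W^*=\lambda_H H^*{H^*}^\T$, and then use $H^*=\bar H^*C$ with $CC^\T=nI_K$. You place the factors on the dimensionally consistent sides (left of \eqref{eq:stat_cond_1}, right of \eqref{eq:stat_cond_2}); the paper's wording swaps them. Your SVD argument also proves a step the paper only asserts, namely that equal outer Gram matrices yield the orthogonal $Q$ in \eqref{eq:NC3bis}.
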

\begin{proof}
Multiply the optimality conditions \eqref{eq:stat_cond_1} and \eqref{eq:stat_cond_2} respectively by ${W^*}^\T$ on the right and by ${\Bar{H}}^{*^\T}$ on the left. Subtracting these two equations gives $\lambda_W N {W^*}^\T W^* = \lambda_H N H^*H^{*^\T}$. By Theorem \ref{thm:variabililty_collapse}, we can substitute $H^* = \Bar{H}^*C$, where $C$ is the class assignment matrix. Finally, leveraging the equality $CC^\T = nI_K$ for balanced classes yields ${W^*}^\T W^* = \frac{\lambda_H n}{\lambda_W} \Bar{H}^*\Bar{H}^{*^\T}$, where we identify $\kappa^2 = \frac{\lambda_Hn}{\lambda_W}$.
\end{proof}

\section{Conclusion}

This work studies the role of class encoding in NC. It demonstrates that within the UFM under MSE loss, the final classifier's bias aims at compensating for the non-centering of the labels. In the case of balanced, one-hot encoded labels, increasing the bias regularization parameter yields a continuous transition of the unconstrained mean features from an OF to an ETF. We finally discuss the dependency of other NC properties on class encoding.

\section*{Acknowledgment}
R. M. is a FRIA grantee of the Fonds de la Recherche Scientifique - FNRS. E. M. work is partly funded by the FRS-FNRS Research Project NTTN (grant number TW02223) and the Concerted Research Action (ARC) “Gravit-AI".

\bibliographystyle{plain}

\bibliography{biblio}

\end{document}